\documentclass[letterpaper, 10 pt, conference]{ieeeconf}  

\IEEEoverridecommandlockouts                              
\usepackage{cite}
\usepackage{amsmath,amssymb,amsfonts,dsfont}
\usepackage{algorithmic}
\usepackage{graphicx}
\usepackage{textcomp}
\usepackage{setspace}
\usepackage{booktabs}
\usepackage{lipsum}

\usepackage{epsfig}
\usepackage{times} 
\usepackage{svg}
\usepackage[linesnumbered,ruled,vlined]{algorithm2e} 

\usepackage[nolist, nohyperlinks]{acronym}

\usepackage{bm}
\usepackage{breqn}
\usepackage{amsmath}
\usepackage{mathtools}
\usepackage{amssymb}
\usepackage{tabularx}

\usepackage{amsthm}

\newcommand{\reals}{\mathbb{R}}
\newcommand{\R}{\reals}
\newcommand{\Rnonneg}{\reals_{\geq 0}}

\newcommand{\naturals}{\mathbb{N}}
\newcommand{\N}{\naturals}
\newcommand{\Znonneg}{\mathbb{Z}_{\geq 0}}

\newcommand{\Pbb}{\mathbb{P}}
\newcommand{\Qbb}{\mathbb{Q}}

\newcommand{\Acal}{\mathcal{A}}
\newcommand{\Bcal}{\mathcal{B}}
\newcommand{\Ccal}{\mathcal{C}}
\newcommand{\Dcal}{\mathcal{D}}

\newcommand{\Ical}{\mathcal{I}}

\newcommand{\Ncal}{\mathcal{N}}
\newcommand{\Ocal}{\mathcal{O}}

\newcommand{\Scal}{\mathcal{S}}

\newcommand{\Ucal}{\mathcal{U}}

\newcommand{\Wcal}{\mathcal{W}}
\newcommand{\Xcal}{\mathcal{X}}

\newcommand{\eqn}[1]{\begin{align} #1 \end{align}}
\newcommand{\eqnN}[1]{\begin{align*} #1 \end{align*}}

\theoremstyle{plain}
\newtheorem{theorem}{Theorem}

\newtheorem{lemma}{Lemma}
\newtheorem{problem}{Problem}
\newtheorem{definition}{Definition}

\theoremstyle{definition}
\newtheorem{assumption}{Assumption}
\newtheorem{remark}{Remark}

\theoremstyle{remark}

\makeatletter
\let\NAT@parse\undefined
\makeatother
\usepackage{hyperref}
\hypersetup{
colorlinks, 
    linkcolor={red!50!black},
    citecolor={blue!50!black},
    urlcolor={blue!80!black}
}
\usepackage{cleveref}
\usepackage[table]{xcolor}
\usepackage[dvipsnames]{xcolor}
\crefformat{problem}{Problem~#2#1#3}
\crefformat{assumption}{Assumption~#2#1#3}

\usepackage[nolist,nohyperlinks]{acronym}
\newcommand{\gk}{\texttt{gatekeeper}}
\newcommand{\sgk}{\texttt{DRS-gatekeeper}}

\newcommand{\w}{{\boldsymbol w}}

\acrodef{PCIS}[$\alpha$-PCIS]{$\alpha$-probabilistic controlled invariant set}
\acrodef{PCI}[$\alpha$-PCI]{$\alpha$-probabilistic controlled invariant}
\acrodef{DRCC}[DRCC]{distributionally robust chance constraint}

\title{\LARGE \bf Distributionally Robust Safety Under Arbitrary Uncertainties: \\ A Safety Filtering Approach
}

\crefname{definition}{Def.}{Defs.}
\author{Daniel M. Cherenson*, Haejoon Lee*, Taekyung Kim, and Dimitra Panagou
\thanks{*These authors contributed equally to this work.}
\thanks{This research was supported by the Center for Autonomous Air Mobility and Sensing (CAAMS), an NSF IUCRC, under Award Number 2137195, an NSF CAREER under Award Number 1942907, and the Air Force Office of Scientific Research (AFOSR) under Award No. FA9550-23-1-0163.}
\thanks{All authors are with the Department of Robotics,
        University of Michigan, Ann Arbor, MI, USA
        {\tt\small \{dmrc, haejoonl, taekyung, dpanagou\}@umich.edu}}%
\thanks{$^a$Project Page:
\href{https://drs-gk.github.io/}{https://drs-gk.github.io/}}}

\begin{document}

\maketitle
\thispagestyle{empty}
\pagestyle{empty}

\begin{abstract}
We study how to ensure probabilistic safety for nonlinear systems under distributional ambiguity. Our approach builds on a backup-based safety filtering framework that switches between a high-performance nominal policy and a certified backup policy to ensure safety. To handle arbitrary uncertainties from ambiguous distributions, i.e., where the distribution is not of specific structure and the true distribution is unknown, we adopt a distributionally robust (DR) formulation using Wasserstein ambiguity sets. Rather than solving a high-dimensional DR trajectory optimization problem online, we exploit the structure of backup-based safety filtering to reduce safety certification to a one-dimensional search over the switching time between nominal and backup policies. We then develop a sampling-based certification procedure with finite-sample guarantees, where empirical failure probabilities are compared against a Wasserstein-inflated threshold. We validate our method across three systems, from a Dubins vehicle to a high-speed racing car and a fighter jet, demonstrating the broad applicability and computational efficiency. \href{https://drs-gk.github.io/}{[Project Page]}$^a$
\end{abstract}

\section{Introduction}


Ensuring the safety of autonomous robotic systems remains a challenge in the presence of complex nonlinear dynamics with actuation limits, uncertainty due to model mismatch, perception errors, and exogenous disturbances. These uncertainties might be unknown or poorly characterized, complicating the synthesis of a safe controller that does not compromise mission performance.

Safety filters have enabled modular architectures, allowing the use of high-performance, safety-agnostic nominal policies and intervening only when a safety violation is imminent~\cite{garg2024advances, hsu2024safety, bansal2017hamilton, ames2019control}. In the presence of disturbances, robust safety filters~\cite{cosner2023robust, knoedler2025safety} provide guarantees under worst-case bounds, while stochastic filters~\cite{wang2025safe,singletary2023safe} assume known disturbance distributions. However, both approaches rely on accurate or restrictive uncertainty models, and thus may fail when the true distribution is unknown, partially observed, or shifted.

Distributional shift or ambiguity is addressed by distributionally robust (DR) safety-critical planning and control~\cite{wiesemann2014distributionally, safaoui2024distributionally}. These approaches account for worst-case uncertainty by enforcing safety against an ambiguity set, that is, a family of all probability distributions consistent with available data, defined by moments or Wasserstein distances~\cite{hakobyan2023distributionally, delage2010distributionally}. This framework has been applied to handle uncertain disturbances in perception~\cite{long2026sensor, ham2026dro,sung2025addressing} and system dynamics~\cite{zhang2024distributionally, aolaritei2023wasserstein}. 

Despite their benefits, integrating DR techniques directly into safety filters remains challenging. Incorporating and verifying DR safety constraints for general nonlinear systems is often computationally expensive~\cite{rahimian2019distributionally}, requiring optimization over a worst-case distribution within high-dimensional ambiguity sets. For instance, nonlinear DR-MPC frameworks can require hundreds of seconds to solve per time step~\cite{hakobyan2021wasserstein}, rendering them unsuitable for real-time safety filtering.

\begin{figure}
    \centering
    \includegraphics[width=0.98\linewidth]{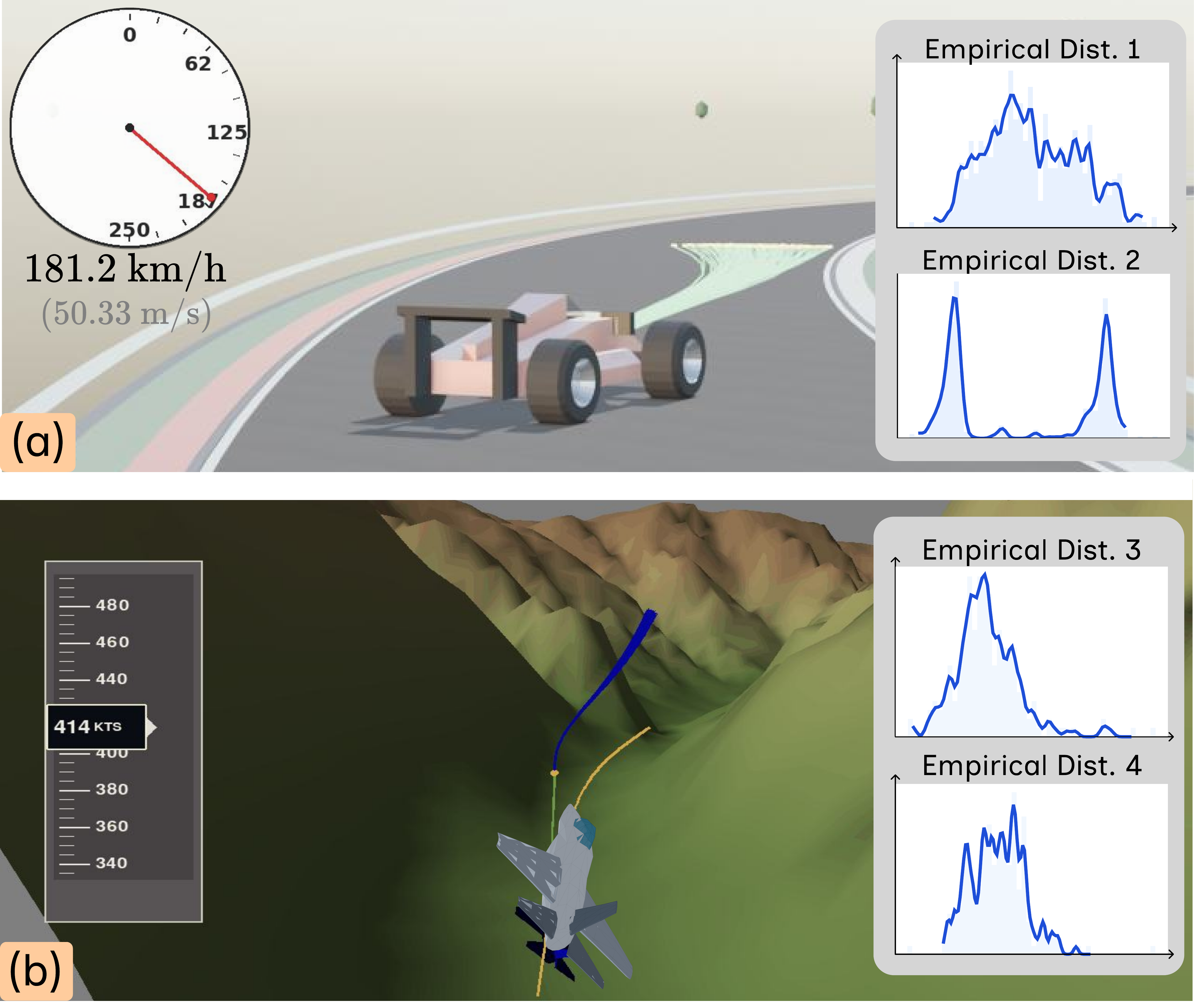}
\caption{Visualizations of (a) a 7-dimensional Formula 1 racecar with 3 control inputs and 4 uncertainty channels, and (b) a 12-dimensional F-16 fighter jet with 4 control inputs and 6 uncertainty channels, operating under complex, non-Gaussian empirical uncertainty distributions.}
    \label{fig:hero_fig}
    \vspace{-20pt}
\end{figure}

Several studies~\cite{summers2018distributionally, aolaritei2023wasserstein, ren2023chance,schuurmans2023safe} reformulate DR problems into deterministic, computationally tractable optimization problems. Nevertheless, these reformulations frequently require restrictive assumptions on the dynamics~\cite{aolaritei2023wasserstein,summers2018distributionally} and uncertainty structure (e.g., additive noises)~\cite{aolaritei2023wasserstein, schuurmans2023safe}. More recently,~\cite{long2026sensor} integrates DR methods with Control Barrier Functions (CBFs) to construct safety filters under perception uncertainty. Still, this approach is restricted to systems that are affine in both control input and noise.

Our approach builds on backup-based safety filters~\cite{bastani2021safe, chen2021backup, agrawal2024gatekeeper}, which ensure safety using pre-verified backup policies and intervene only when necessary. This architecture is well suited to high-dimensional nonlinear systems with input constraints, avoiding manual construction and feasibility issues of CBFs. However, existing methods only assume deterministic dynamics~\cite{bastani2021safe, chen2021backup} or restrictive uncertainty models, such as uniformly bounded disturbances~\cite{knoedler2025safety, bastani2021stat, agrawal2024gatekeeper}, limiting their applicability under unknown or misspecified distributions.


We propose \textbf{Distributionally Robust Stochastic \gk{} (\sgk{})}, a real-time, backup-based safety filter that provides probabilistic safety guarantees under arbitrary uncertainty structure (non-additive, non-affine). We reformulate the infinite-dimensional DR safety problem into a one-dimensional search for the switching time between nominal and backup policies, enabling efficient real-time implementation. Safety is certified using samples drawn only from the nominal distribution, while robustness to distributional shift is achieved by a Wasserstein-derived inflated threshold for checking unsafe rollouts.
The main contributions are:


\begin{itemize}
    \item We introduce \sgk{}, a safety filter for nonlinear systems with arbitrary disturbance structure and distributional ambiguity.

  {\item We establish a theoretical connection between the failure probability under a nominal distribution and the worst-case failure probability over a Wasserstein ambiguity set to derive a tractable finite-sample probabilistic safety certificate under distributional ambiguity.
  }
    \item We validate our method across a Dubins vehicle, a Formula 1 racing car, and an F-16 fighter jet, demonstrating better safety-performance trade-offs compared to baseline methods and broad applicability.
\end{itemize}

\section{Problem Formulation}

Let $\Znonneg$ and $\mathbb{R}$ denote the sets of nonnegative integers and real numbers. Let $\Dcal(\Wcal)$ be the space of all probability distributions supported on set $\Wcal$ and let $\otimes$ denote the product measure. $\Gamma(\mathbb{Q}, \mathbb{Q}')$ represents the space of all joint distributions with marginals $\mathbb{Q}$ and $\mathbb{Q}'$. The cumulative binomial distribution function, $\text{Bin}(k; N, q)$, yields the probability of $k$ or fewer successes in $N$ trials with probability $q$. Let $\mathds{1}(\cdot)$ denote the indicator function.

\subsubsection{System Dynamics and Probability Distributions}
 Consider a discrete-time, nonlinear system
 \eqn{x_{t+1} & = f(x_t, u_t, w^f_t) \label{eq:system}} 
where $x_t \in \mathcal{X} \subseteq \mathbb{R}^n$, $u_t \in \mathcal{U} \subset \mathbb{R}^m$, and $w^f_t \in \mathcal{W}^f \subset \mathbb{R}^p$ denote the state, control input, and process noise at time $t \in \Znonneg$, respectively. The dynamics function $f : \Xcal \times \Ucal \times \Wcal^f \to \Xcal$ is Lipschitz and $w_t^f$ is drawn from a state- and input-dependent distribution $\Pbb^f(\cdot | x_t, u_t) \in \Dcal(\Wcal^f)$.

Under a feedback controller $\pi : \Xcal \to \Ucal$, the closed loop system is given by
\eqn{\label{eq:closed_loop}
    x_{t+1} &= f(x_t, \pi(x_{t}), w^f_t).
}For a planning horizon $T$, we denote the noise {trajectory as $\w^f_T \coloneqq (w^f_{t}, \dots, w^f_{t+T-1}) \sim \Pbb^\pi_{{T}}$,} where 
\eqn{{\Pbb}_{{T}}^\pi(\w^f_T) &= \bigotimes_{\tau=t}^{t+T-1} {\Pbb^f}(w^f_\tau \mid x_\tau, \pi(x_\tau))~\text{ s.t. }~\eqref{eq:closed_loop}.\label{eq:process_noise}}

\subsubsection{Safety Constraint}

We require the system~\eqref{eq:system} to remain safe throughout its operation. Safety is defined as the state $x_t$ remains outside the unsafe set $\Ocal(\theta)$ for all time $t$, where we assume $\Ocal(\theta)$ is closed and parameterized by an unknown, static parameter vector $\theta \in \Theta \subset \R^{v}${, which abstracts safe-set parameters, e.g., obstacle location, size, or parameters for distance.} We define $h : \Xcal \times \Theta \to \R$ as a distance function to the unsafe set, with safe set as $\Scal(\theta):=\{x_t \in \Xcal \mid h(x_t,\theta)\ge 0\}$. We assume that $h$ is globally Lipschitz w.r.t. $\theta$.

The system senses $\Ocal(\theta)$ through a perception module that processes observations $z_t = g(x_t, \theta, w^z_t)$
with noise $w^z_t \sim \Pbb^z$. When $g$ is non-invertible, recovery of $\theta$ becomes intractable. To address this, we abstract the measurement process into a conditional distribution $\Pbb^\theta_t = \Pbb(\cdot \mid x_t, z_t) \in \Dcal(\Theta)$, representing the agent’s belief of the parameters of the unsafe set. Through this distribution, we generate parameter hypotheses $w^\theta_t \sim \Pbb^\theta_t$ through sampling.

The process noise $\w^f_{{T}}$ and perceived unsafe-set parameter $w^\theta_t$ are assumed to be independent. For notational simplicity, we define the joint distribution of these noises as the product measure $\Pbb_{{T}} = \Pbb^\pi_T \otimes \Pbb^\theta_t$,
and we lump all the noises into a single random variable $\w_{{T}} \sim \Pbb_{{T}}, \ \w_{{T}}\in \Wcal \subset \R^{v+Tp}$.

With the uncertainty $\w_T$, we aim to provide probabilistic safety. Let $\varphi_\tau^\pi(x_t, \w^f_{{T}})$ be the solution of the closed-loop dynamics~\eqref{eq:closed_loop} under policy $\pi$ {propagated for $\tau \in [0,T]$ steps}, with initial condition $x_t$ and noise trajectory $\w_{{T}}^f$. {Since our objective is to ensure safety over the entire predicted evolution of the system, we define probabilistic safety over the infinite horizon, corresponding to the limit $T\to\infty$, as
\eqn{\Pr{}_{\Pbb_{\infty}}\bigg[\bigcap_{\tau = 0}^\infty h(\varphi_\tau^\pi(x_t, \w^f_{\infty}), w^\theta_{t}) \ge 0\bigg] \ge 1-\varepsilon, \tag{CC}\label{eq:cc}}}where $\varepsilon \in (0,1)$ is the allowable probability of failure. {However, the infinite-horizon safety constraint in~\ref{eq:cc} is intractable to verify. Using the notions of controlled invariant set and backup policy, defined in the next section, we verify the infinite-horizon chance constraint satisfaction with a chance constraint defined over a finite horizon $T$}.

\subsubsection{Controlled Invariant Set and Backup Policy}

We consider a \textit{nominal policy} $\pi_N : \Xcal \to \Ucal$ that is designed for mission performance, but does not satisfy the safety constraints. Our goal is to track the nominal policy for as long as possible while ensuring safety. To this end, we switch to a backup policy that renders the safe set forward invariant with high probability only when the nominal policy would lead to unsafe behavior. To define the backup policy, first we define a \acl{PCIS}~\cite{gao2021computing}:
\begin{definition}[Infinite-Horizon \ac{PCIS}] Let $\Ccal = \{x \in \Xcal ~|~ h_c(x) \ge 0\}$ be a set, where $h_c : \Xcal \to \mathbb{R}$. For the system~\eqref{eq:system}, a controller $\pi : \Xcal \to \Ucal$ renders $\Ccal$ \textbf{\ac{PCI}} if for $\alpha \in [0,1]$, the closed-loop system~\eqref{eq:closed_loop} satisfies
\eqn{\label{eq:pci_alpha}{
    x_t \in \Ccal \implies \Pr{}_{\Pbb_\infty}[x_
\tau\in \Ccal, \; \forall 
\tau\ge t] \ge 1-\alpha, ~ \forall t \in \Znonneg.}}
\end{definition}

Methods for computing these sets have been developed in the literature~\cite{pola2006invariance, abate2008probabilistic, gao2021computing}. Next, we define a backup policy that reaches a set $\Ccal$ and renders it \ac{PCI}.
\begin{definition}[Backup Policy]
\label{def:backup}
    A policy $\pi_B : \Xcal \to \Ucal$ is a \textbf{backup policy} for {$\Ccal\subset \Scal(\theta)\subset \Xcal$ defined for all $t \ge t_0$ and $\theta\in \Theta$} if, for the closed-loop system~\eqref{eq:closed_loop}, there exists a neighborhood ${\Ncal(T)} \subset \Xcal$ of $\Ccal$ such that $\Ccal$ is reachable {within $T$, i.e.,}
    \eqn{
        x_\tau \in {\Ncal(T)} \implies x_{\tau + {T}} \in \Ccal,~ \forall \tau \ge t_0,
    }
    and $\pi_B$ renders $\Ccal$ \ac{PCI}, i.e.,
    \eqn{
        x_{\tau + {T}} \in \Ccal \implies \Pr{}_{\Pbb_\infty}[x_{t'} \in \Ccal, \; \forall t' \ge \tau + {T}] \ge 1-\alpha.
    }
\end{definition}

\begin{assumption}
    We have a backup policy $\pi_B$ that renders a known set {$\Ccal\subset\Scal(\theta)$ \ac{PCI} for all $\theta\in \Theta$, with $\alpha < \varepsilon$.}
    \label{assum:exist_backup}
\end{assumption}

Constructive methods for designing such backup policies are studied in~\cite{pola2006invariance, abate2008probabilistic}, which provide conditions and algorithms for computing such policies.

\begin{figure*}[t]
    \centering
    \vspace{6pt}
    \includegraphics[width=\textwidth]{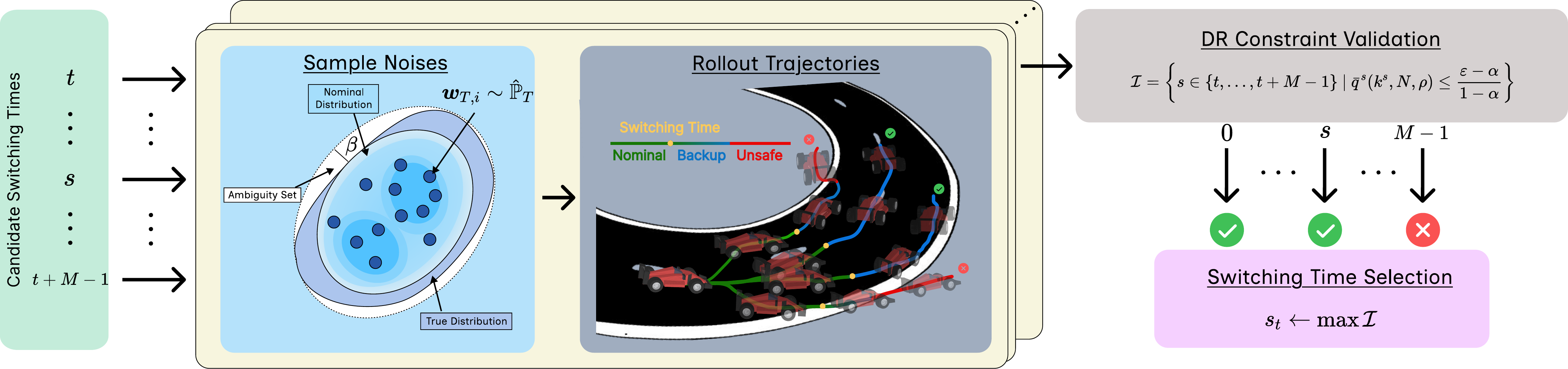}
    \caption{Visual flow chart of~\sgk{}. At each time step, for every candidate switching time $s\in\{t,\dots, t+M-1\}$, we sample $N$ noise trajectories from the nominal noise distribution and perform rollouts to evaluate safety. It then counts constraint violations and computes a distributionally robust upper bound on the failure probability. Finally, we select the largest switching time satisfying the DR failure probability bound, or uses the previous switching time if no candidate is certified. }
    \label{fig:flowchart}
    \vspace{-20pt}
\end{figure*}

We want to track the nominal policy as long as possible. Thus, with~\Cref{assum:exist_backup}, our problem can be cast as determining \textit{when to switch} from $\pi_N$ to $\pi_B$. Formally, 
\begin{definition}[Switching Policy]
    A \textbf{switching policy} $\pi_{S} : \Xcal \times \Znonneg \times \Znonneg \to \Ucal$ is a policy that switches from the nominal to the backup policy
    \eqn{
    \label{eq:switched_time}
        {{\pi_S(x, t; s)} = 
        \begin{cases}
            \pi_N(x), & s > t\\
            \pi_B(x), & s \le t,
        \end{cases} }}{where the shorthand $\pi_S(;s)$ is the switching policy with switching time $s$.}
\end{definition}
{Note that $\Ccal\subset\Ncal(T)$, but $\Ncal(T)$ may not be safe.} Thus, a trajectory under $\pi_B$ is not necessarily safe, motivating the search for a switching time $s_t^\star$ such that the entire closed-loop trajectory under $\pi_S(;s_t^\star)$ is safe. We define a function that evaluates constraint satisfaction over the horizon $T$ and determines whether the trajectory reaches an \ac{PCIS}.
\begin{definition}[Finite-Horizon Safety Function]
    {Given an \ac{PCIS} $\Ccal$ and a switching policy $\pi_S$}, the induced \textbf{finite-horizon safety function} $H : \Xcal \times \Znonneg \times  \Znonneg \times \Wcal \to \R$ with shorthand $H^{x,s}_{t}(\cdot) = H(x, t,s, \cdot)$ is defined as \eqnN{H(x, t,s, \w_T) = \min\bigg\{ & \min_{0 \le\tau \le T} h(\varphi_\tau^{\pi_S(;s)}(x, \w^f_T), w^\theta_{t}), \\&h_c(\varphi_T^{\pi_S(;s)}(x, \w^f_T)) \bigg\}.}
\end{definition}
{

Under~\Cref{assum:exist_backup}, once the system reaches $\Ccal$, the backup policy $\pi_B$ keeps it in $\Ccal$ for all $\tau \geq t+T$ with probability at least $1-\alpha$. Infinite-horizon safety can then be decomposed into (i) safely reaching $\Ccal$ within the finite horizon and (ii) remaining in $\Ccal$ thereafter under the backup policy. Thus, with the switching policy $\pi_S(;s)$ that satisfies~\Cref{assum:exist_backup}, we define the following finite-horizon sufficient condition for~\ref{eq:cc}:
\eqn{ \mathrm{Pr}_{\Pbb_T} \left[H^{x_t,s}_{t}(\w_T) \ge 0  \right] \ge \frac{1 -\varepsilon}{1-\alpha}. \label{eq:fhcc}\tag{FHCC}}
}
\vspace{-15pt}
\subsubsection{Wasserstein Ambiguity Sets}
The distribution $\Pbb_{T}$ is meant to capture the aleatoric uncertainty (inherent stochasticity) in the system and environment at time $t\in \Znonneg$. {In practice, $\Pbb_{T}$ is often unknown.} Thus, we typically rely on a nominal distribution $\hat{\Pbb}_T$ constructed from empirical data or simplified models to evaluate~\ref{eq:fhcc}. This dependence on $\hat{\Pbb}_T$ introduces epistemic uncertainty; a controller designed solely for $\hat{\Pbb}_T$ may fail due to a distributional shift. Consequently, satisfying~\ref{eq:fhcc} under the nominal model does not guarantee safety in the real world.


To achieve robustness to such epistemic uncertainty, we consider an ambiguity set centered around the nominal distribution $\hat{\Pbb}_T$ that contains the true distribution $\Pbb_{T}$:
\begin{definition}[Ambiguity Set]
    Given a nominal noise distribution $\hat{\Pbb}_T \in \Dcal(\Wcal)$, the ambiguity set is $\Bcal_\beta(\hat{\Pbb}_T) = \{ \Qbb ~|~ d(\Qbb,\hat{\Pbb}_T) \le \beta\}$, where $d : \Dcal(\Wcal) \times \Dcal(\Wcal) \to \R_{\ge 0}$ is a distance function between two distributions and $\beta \ge 0$ is the radius.
\end{definition}

One common distance function between distributions is the $\infty$-Wasserstein distance, which measures the smallest worst-case displacement needed to transport mass from one distribution to another:


\begin{definition}[$\infty$-Wasserstein distance~\cite{bertsimas2023data}]
The $\infty$-Wasserstein distance between two probability measures
$\Qbb, \Qbb' \in \mathcal{D}(\mathcal{W})$ is defined as
\eqn{
W_{\infty}(\Qbb,\Qbb')
=
\inf_{\gamma \in \Gamma(\Qbb,\Qbb')}
\gamma\text{-}\operatorname*{ess\,sup}
\|\xi - \xi'\|,}
where $\Gamma(\Qbb,\Qbb')$ denotes the set of all couplings (joint probability measures) with marginals $\Qbb$ and $\Qbb'$, $\|\cdot\|$ is a norm on $\Wcal$, and
\[
\gamma\text{-}\operatorname*{ess\,sup}_{(\xi,\xi') \sim \gamma}
\|\xi - \xi'\|
:=
\inf \left\{
B \ge 0 :
\gamma\big(\|\xi - \xi'\| > B\big) = 0
\right\}.
\]
\end{definition}
In this paper, we will use the $\infty$-Wasserstein distance to define the ambiguity set $\Bcal_\beta(\hat{\Pbb}_T)$.

\begin{assumption}
\label{assum:wasserstein}
    The true distribution $\Pbb_{T}$ lies within an $\infty$-Wasserstein ambiguity set of known radius $\beta \ge 0$ centered at the nominal distribution $\hat{\Pbb}_T$, i.e., $\Pbb_{T} \in \Bcal_\beta(\hat{\Pbb}_T)$.
\end{assumption}

{In practice,  if $\beta$ is unknown, $\beta$ can be treated as a tuning parameter that controls the tradeoff between performance and robustness, which is common in the distributionally robust control literature~\cite{aolaritei2023wasserstein,ham2026dro, hakobyan2023distributionally}. 

We further refine the safety objective in~\ref{eq:fhcc}} using the ambiguity set to formulate a DR chance constraint (DRCC) as a sufficient condition for~\ref{eq:fhcc}, which ensures that the worst-case probability of safety is no less than $\frac{1-\varepsilon}{1-\alpha}$:

    \eqn{{\inf_{\Qbb_T \in \Bcal_\beta(\hat{\Pbb}_T)} \mathrm{Pr}_{\Qbb_T} \left[H^{x_t,s}_{t}(\w_T) \ge 0 \right] \ge \frac{1 -\varepsilon}{1-\alpha}. \label{eq:drcc}\tag{DRCC}}}

A policy $\pi$ is said to satisfy~\ref{eq:drcc} if~\ref{eq:drcc} holds under $\pi$. {Note that satisfaction of~\ref{eq:drcc} implies satisfaction of~\ref{eq:fhcc}. Combined with the probabilistic invariance of $\Ccal$ under~\Cref{assum:exist_backup}, this also implies satisfaction of~\ref{eq:cc}. Hence,~\ref{eq:drcc} provides a distributionally robust sufficient condition for~\ref{eq:cc} without direct knowledge of $\Pbb_T$.}

\subsubsection{Problem Statement}
{
Our objective is to select the largest switching time $s_t^\star$ that satisfies~\ref{eq:cc} at each time $t \in \Znonneg$, resulting in infinite-horizon safety with high probability despite distributional ambiguity.} Instead of directly verifying~\ref{eq:cc}, our approach focuses on~\ref{eq:drcc}. However, verifying~\ref{eq:drcc} for arbitrary nominal distributions $\hat{\Pbb}_T$ is generally intractable~\cite{rahimian2019distributionally}. Hence, we employ a Monte Carlo-inspired sampling approach with finite-sample analysis to select a switching policy that {satisfies~\ref{eq:drcc} with high confidence, in turn satisfying~\ref{eq:cc}}: 
\begin{problem}[Distributionally Robust Safety with Arbitrary Uncertainty Models]
{At each time $t \in \Znonneg$, given a finite planning horizon $T$,} a nominal noise distribution $\hat\Pbb_{T}$ and an ambiguity set $\Bcal_\beta(\hat{\Pbb}_T)$, choose a switching time $s_t^\star$ for the switching policy $\pi_S$ such that 
\eqn{\mathrm{Pr}_N[ \pi_S(;s_t^\star) \textrm{ satisfies \ref{eq:cc}} ]\ge 1-\delta,\label{eq:problem_prob_equality}}
where $\delta \in (0,1)$ is a user-specified error rate {and $\Pr{}_N$ accounts for the randomness from $N \in \Znonneg$ finite samples.}
\label{prob:problem}
\end{problem}

{We note that the above guarantee holds at each time step $t \in \Znonneg$. We will re-compute $s_t^\star$ every time step to update the switching time in a receding horizon manner with the goal of delaying the switch to the backup policy until it is necessary to achieve the desired probability of safety.}

\section{Methodology}

To address~\Cref{prob:problem}, we develop a distributionally robust backup-based safety filter. More specifically, we extend \gk{}~\cite{agrawal2024gatekeeper} which provides an efficient, policy-agnostic safety layer by rolling out a trajectory to determine the latest possible time to switch from a performance-oriented nominal policy to a certified backup safety policy. By prioritizing the nominal policy until the latest possible ``safe'' moment and intervening only when necessary, \gk{} has been shown to be more effective than other backup-based safety filters (cf.~\cite{kim2026backup}).

Despite these benefits, the vanilla \gk{} is restricted to deterministic systems or systems with a known input-to-state stability property w.r.t. a bounded disturbance. Thus, we introduce \sgk{} that retains the efficiency of the original method while providing rigorous probabilistic safety guarantees (see~\Cref{thm:infinite_horizon}). Rather than using a single rollout, \sgk{} rolls out in parallel a set of independent trajectories to ensure probabilistic safety against all disturbance distributions within a Wasserstein ambiguity set containing the unknown true distribution $\Pbb_{{T}}$.

\begin{algorithm}
    \SetKwInOut{Parameters}{Parameters}
    \SetKwInOut{Inputs}{Inputs}
    \caption{\sgk{}}
    \label{alg:gatekeeper}
    \Parameters{$M, T,N, \delta, \varepsilon, \beta, \alpha$}
    \Inputs{$t,x_t, z_t, s_{t-1}^\star$}
    \For{$s \in \{t,\ldots,t+M-1\}$}{
        \For{$i \in \{ 1, \dots, N\}$}{
            \tcp{In parallel}
        Sample $\w_{T,i} \sim {\hat{\Pbb}_T^{x_t,s} = \hat\Pbb^{\pi_S(;s)}_T \otimes \hat\Pbb^\theta_t(\cdot\mid x_t,z_t)}$\;
        Rollout to compute $H^{x_t,s}_{t}(\w_{T,i})$\;
        }

    {$L_H^{x_t,s} = \texttt{GetLipschitz}(x_t, s)$\;}\label{line:lipschitz}
    
    $k^s = \sum_i \mathds{1}(H^{x_t,s}_{t}(\w_{T,i}) {\le} L_H^{x_t,s}\beta)$\; 
    }
     $\rho = 1-(1-\delta)^{1/M}$\;
    $\Ical = \{ s \in \{t,\ldots,t+M-1\}~|~\bar q^s(k^s, N, \rho) \le \frac {\varepsilon-\alpha}{1-\alpha}\}$\;

    \If{$\Ical \ne \emptyset$}
    {
        {$s_t^\star \gets \max\Ical$\;}
    }
    \Else{
    $s_t^\star \gets s_{t-1}^\star$\;
    }
    \Return $s_t^\star$\;
\end{algorithm}

The \sgk{} algorithm is detailed in~\Cref{alg:gatekeeper}, with its flow chart illustrated in~\Cref{fig:flowchart}. At each time $t \in \Znonneg$, the system evaluates $M$ candidate switching times over a rollout horizon $T$, {where $M \le T$.} The algorithm generates $N$ sampled rollouts for each candidate switching time, and uses risk tolerance $\varepsilon$ and ambiguity set radius $\beta$ to provide safety guarantees with confidence level $1-\delta$. Inputs include the current state $x_t$, unsafe set measurements $z_t$, and the previous switching time $s_{t-1}^\star$. We initialize $s_{-1}^\star = 0$.

The procedure follows four main phases. First, the algorithm begins with a parallelized Monte Carlo evaluation of the $M$ candidate switching times. For each candidate switching time $s\in\{t, \dots, t+M-1\}$, we generate $N$ noise trajectory samples {$\w_{T,i} \sim\hat{\Pbb}_T^{x_t,s} = \hat\Pbb^{\pi_S(;s)}_T \otimes \hat\Pbb^\theta_t$ where $\hat{\Pbb}^{x_t,s}_T$ depends on $x_t$, and $s$. For notational simplicity, we subsequently denote this distribution by $\hat{\Pbb}_T$.} Second, the algorithm counts the number of sampled rollouts $k^s$ for which the safety function output falls below the inflated failure buffer $L_H^{x_t,s} \beta$ to account for distributional ambiguity, where $L_H^{x_t,s} \ge 0$ is a Lipschitz constant of $H^{x_t,s}_{t}$ w.r.t. $\w_{T}$, provided by the function $\texttt{GetLipschitz}:\Xcal \times \Znonneg \to \Rnonneg$, and $\beta \ge 0$ is the ambiguity set radius from~\Cref{assum:wasserstein}. Third, $k^s$ is used to evaluate a statistical upper bound $\bar q^s$ on the true failure probability $q^s = \mathrm{Pr}_{\hat\Pbb_T}[H^{x_t,s}_{t}(\w_T) \le L_H^{x_t,s} \beta]$:{
    \begin{equation}
\bar q^s := \bar q^s(k^s, N, \rho) = \max\left\{ q \in [0,1] \;\middle|\; \mathrm{Bin}(k^s; N, q) \ge \rho \right\},
\label{eq:failure_prob_bound}
\end{equation}
such that $\Pr_N[\bar q^s \ge q^s] \ge 1- \rho$ by~\cite[Theorem 4]{vincent2024guarantees} and $\rho = 1 - (1-\delta)^{\frac{1}{M}}$ is the standard {\v{S}}id{\'a}k correction for multi-hypothesis testing~\cite{abdi2007bonferroni}. That is, $\bar q^s$ correctly upper bounds $q^s$ with high confidence. 
 A candidate is valid only if its statistical upper bound $\bar{q}^s$ is less than or equal to the allowable failure probability of $\frac{\varepsilon-\alpha}{1-\alpha}$.} Finally, the algorithm selects the maximum valid switching time within $\Ical$, or reverts to $s_{t-1}^\star$ if no new candidates can be certified.

Once the switching time $s_t^\star$ is reached at some time $t'\geq t$ (i.e., $t'=s_{t'}^\star$), the robot switches to the backup policy $\pi_B$. {If the \ac{PCIS} is then reached, safety is guaranteed with high probability by \Cref{def:backup}}. While using $\pi_B$, the system continues executing~\sgk{}; if a new valid switching time $s_{t''}^\star>t''$ is found at time $t'' > t'$, the robot reverts to $\pi_N$. 


\subsection{Probabilistic Safety Guarantees}
\label{sec:analysis}

{Note that our algorithm samples only from the nominal distribution $\hat{\Pbb}_T$ rather than from the ambiguity set $\Bcal_\beta(\hat{\Pbb}_T)$. However, we certify safety w.r.t. the entire ambiguity set by establishing a theoretical connection between $\hat{\Pbb}_T$ and $\Bcal_\beta(\hat{\Pbb}_T)$ through a Wasserstein-inflated failure threshold (see~\Cref{lem:dist_robust}). This enables us to provide a formal probabilistic safety guarantee in~\Cref{thm:infinite_horizon} that holds regardless of the time horizon $T$ and the number of samples $N$}.

We now rigorously establish that \sgk{} satisfies~\ref{eq:cc} with a desired confidence level of at least $1-\delta$, thereby solving~\Cref{prob:problem}. We first assume the following:

\begin{assumption}
    Let $\Acal := \{\w_{{T}}\in\Wcal ~|~ H^{x,s}_{t}(\w_{{T}}) < 0\}$ and $\Acal^\beta := \{\w_{{T}}\in\Wcal ~|~ d(\w_{{T}},\Acal) {\le} \beta\}$, where $d(\w_{{T}},\Acal):=\inf_{\bar \w_{{T}}\in\Acal}\|\w_{{T}}-\bar \w_{{T}}\|$. $H^{x,s}_{t}(\cdot)$ is Lipschitz continuous on the set $\Acal^\beta$ with Lipschitz constant $L_H^{x,s}$ for all $x \in \Xcal$, $t \in \Znonneg$, and $s \in \{t,\dots,t+M-1\}$. {Furthermore, a Lipschitz constant $L_H^{x,s}$ is available through \texttt{GetLipschitz}}.\label{assum:lip}
\end{assumption}

\begin{remark}
{When the functional forms of  $H^{x_t,s}_{t}$, system dynamics, and their dependence on $\w_T$ are known, a Lipschitz constant $L_H^{x_t,s}$ can be analytically derived. When such an analytical bound is not available, $L_H^{x_t,s}$ can be estimated empirically with data with confidence $1-\delta_{L}$~\cite{knuth2021planning,huang2023sample}.


}
\end{remark}
The following lemma states that the intractable DR worst case probability of failure is upper bounded by the probability of an inflated failure set, which is tractable to compute.
\begin{lemma}
Let Assumptions~\ref{assum:wasserstein} and~\ref{assum:lip} hold. Then,
\eqn{\label{eq:distributional_shift}
\sup_{\Qbb_T \in \Bcal_\beta(\hat\Pbb_T)} &\mathrm{Pr}_{\Qbb_T} [H^{x,s}_{t}(\w_{{T}}) < 0] \nonumber \\ \le ~&\mathrm{Pr}_{\hat\Pbb_T} [H^{x,s}_{t}(\w_{{T}}) \le L_H^{x,s}\beta],}
where $L_H^{x,s}$ is a Lipschitz constant of $H^{x,s}_{t}(\cdot)$ on $\Acal^\beta$.
\label{lem:dist_robust}
\end{lemma}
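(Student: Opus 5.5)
The proof goes through a coupling argument for the $\infty$-Wasserstein ball, followed by a Lipschitz bound on the inflated failure set. Fix any $\Qbb_T \in \Bcal_\beta(\hat\Pbb_T)$, so that $W_\infty(\Qbb_T,\hat\Pbb_T)\le\beta$.

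\textbf{Step 1: reduce to a set inclusion.} By definition of $W_\infty$, for every $\eta>0$ there is a coupling $\gamma\in\Gamma(\Qbb_T,\hat\Pbb_T)$ with $\|\xi-\xi'\|\le\beta+\eta$ holding $\gamma$-almost surely. (Attainment of the infimum also holds by compactness of couplings, but the $\eta$-version suffices.) Then
\begin{align*}
\mathrm{Pr}_{\Qbb_T}[\xi\in\Acal]=\gamma(\xi\in\Acal)\le\gamma(\xi'\in\Acal^{\beta+\eta})=\hat\Pbb_T(\Acal^{\beta+\eta}),
\end{align*}
because $\xi\in\Acal$ together with $\|\xi-\xi'\|\le\beta+\eta$ gives $d(\xi',\Acal)\le\beta+\eta$. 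Letting $\eta\downarrow0$, the sets $\Acal^{\beta+\eta}$ decrease to $\bigcap_\eta\Acal^{\beta+\eta}=\Acal^\beta$, which is closed under the non-strict inequality in its definition. Continuity from above of the finite measure $\hat\Pbb_T$ then gives $\mathrm{Pr}_{\Qbb_T}[H<0]\le\hat\Pbb_T(\Acal^\beta)$.

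\textbf{Step 2: show $\Acal^\beta\subseteq\{\w_T : H^{x,s}_t(\w_T)\le L_H^{x,s}\beta\}$.} Take $\w\in\Acal^\beta$. For any $\epsilon>0$ there is $\bar\w\in\Acal$ with $\|\w-\bar\w\|\le\beta+\epsilon$. Since $\Acal\subseteq\Acal^\beta$, both points lie in $\Acal^\beta$, and \Cref{assum:lip} gives
\begin{align*}
H(\w)\le H(\bar\w)+L_H\|\w-\bar\w\|<L_H(\beta+\epsilon).
\end{align*}
Letting $\epsilon\to0$ yields $H(\w)\le L_H\beta$. Combining with Step 1 and taking the supremum over $\Qbb_T$ proves the claim.

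\textbf{Main obstacle.} The delicate point is Step 2. Lipschitz continuity is assumed only on $\Acal^\beta$, so the bound is legitimate only if the pair $\w,\bar\w$ lies in that set, which it does since $\Acal\subseteq\Acal^\beta$. The boundary issues also need care: the infimum in $d(\cdot,\Acal)$ may not be attained, and the essential supremum in $W_\infty$ may only be approached. Both are handled by the $\epsilon,\eta$ limits above. Measurability of $\Acal^\beta$ is not an issue, since $d(\cdot,\Acal)$ is continuous.
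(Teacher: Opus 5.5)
Your proof is correct and takes essentially the same route as the paper. You use a coupling argument to get $\mathrm{Pr}_{\Qbb_T}(\Acal)\le\hat\Pbb_T(\Acal^\beta)$, then the same $\epsilon$-approximation plus Lipschitz bound to show $\Acal^\beta\subseteq\{H^{x,s}_t\le L_H^{x,s}\beta\}$. The one difference is that you work with $\eta$-optimal couplings and continuity from above of $\hat\Pbb_T$, whereas the paper simply asserts that a coupling with $\|\w_T-\w_T'\|\le\beta$ almost surely exists; your version makes Step 1 slightly more self-contained.
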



\begin{proof}
For any $\Qbb_{T}\in\Bcal_\beta(\hat \Pbb_{T})$, there exists a coupling $\gamma\in\Gamma(\Qbb_{T},\hat \Pbb_{T})$ such that $\|\w_{T}-\w_{T}'\|\le \beta$ $\gamma$-almost surely for $(\w_{T},\w_{T}')\sim\gamma$.

Now suppose $\w_{T}\in\Acal$. Since $\|\w_{T}-\w_{T}'\|\le \beta$, we have
\eqnN{d(\w_{T}',\Acal) \le \|\w_{T}'-\w_{T}\| \le \beta.}
Hence $\w_{T}'\in\Acal^\beta$, and therefore
\eqnN{\{\w_{T}\in\Acal\} \subseteq \{\w_{T}'\in\Acal^\beta\} \qquad \gamma\text{-almost surely}.}
Taking probabilities with respect to $\gamma$ gives
\eqnN{\mathrm{Pr}_{\Qbb_{T}}(\Acal) = \gamma(\w_{T}\in\Acal) \le \gamma(\w_{T}'\in\Acal^\beta) = \mathrm{Pr}_{\hat \Pbb_{T}}(\Acal^\beta).}
Since this holds for every $\Qbb_{T}\in\Bcal_\beta(\hat \Pbb_{T})$, we obtain
\eqnN{\sup_{\Qbb_{T}\in\Bcal_\beta(\hat \Pbb_{T})} \mathrm{Pr}_{\Qbb_{T}}(\Acal) \le \mathrm{Pr}_{\hat \Pbb_{T}}(\Acal^\beta).}

It remains to show that $\Acal^\beta \subset \{\w_{T} \in \Wcal ~|~ H_t^{x,s}(\w_{T}) \le L_H^{x,s}\beta\}$. Let $\w_{T}'\in\Acal^\beta$. Then, by definition of $\Acal^\beta$, for any $\eta>0$ there exists $\bar \w_{T}^\eta\in\Acal$ such that $\|\w_{T}'-\bar \w_{T}^\eta\|\le \beta+\eta$. Since $\bar \w_{T}^\eta\in\Acal$, we have $H_t^{x,s}(\bar \w_{T}^\eta)\le 0$. By~\Cref{assum:lip},
\eqnN{H_t^{x,s}(\w_{T}') \le H_t^{x,s}(\bar \w_{T}^\eta) + L_H^{x,s}\|\w_{T}'-\bar \w_{T}^\eta\| \le L_H^{x,s}(\beta+\eta).}
Letting $\eta\downarrow 0$ gives $H_t^{x,s}(\w_{T}') \le L_H^{x,s}\beta$. Hence,
\eqnN{\Acal^\beta {\subseteq} \{\w_{T} \in \Wcal ~|~ H_t^{x,s}(\w_{T}) \le L_H^{x,s}\beta\}.}
Finally, \eqnN{\sup_{\Qbb_{T} \in \Bcal_\beta(\hat \Pbb_{T})} \mathrm{Pr}_{\Qbb_{T}} (\Acal)& \le \mathrm{Pr}_{\hat \Pbb_{T}} (\Acal^{\beta}) \\&\le \mathrm{Pr}_{\hat \Pbb_{T}} (H^{x,s}_T(\w_{T}) \le L_H^{x,s}\beta),}
which completes the proof.
\end{proof}

\begin{theorem}
\label{thm:infinite_horizon}
Let Assumptions \ref{assum:exist_backup}-\ref{assum:lip} hold.
    Let system~\eqref{eq:system} have a nominal policy $\pi_N$ and run~\sgk{} at every time $t \in \Znonneg$ to determine its switching time $s_t^\star$ for the switching policy $\pi_S$~\eqref{eq:switched_time} with $M$ switching times and $N$ rollouts per candidate switching time. Let $\delta,\varepsilon \in(0,1)$ be the desired error and failure rates, and let $\beta \ge 0$ be the radius of the ambiguity set. Then, {if $\Ical\neq \emptyset$ at time $t$, $\pi_S(;s_t^\star)$ satisfies~\ref{eq:cc} with confidence $1-\delta$, i.e.,~\eqref{eq:problem_prob_equality} holds.} 
\end{theorem}

\begin{proof}

For each switching time $s \in \{t,\ldots,t+M-1\}$, let $k^s = \sum_{i=1}^NJ_i^s$ be the observed number of failures, where $J^s_i= \mathds{1}(H^{x_t,s}_{t}(\w_{T,i}) \le L_H^{x_t,s}\beta)$.  Since~\sgk{} estimates the distribution of $H^{x_t,s}_{t}(\w_T)$ by sampling $N$ i.i.d. realizations of the noise as $\w_{T,i}$, there are $N$ Bernoulli random variables $J_i^s$ for each switching time $s$, {from which we compute $\bar q^s$ using~\eqref{eq:failure_prob_bound}.}
    The true (unknown) probability of the rollout failing according to the inflated constraint 
    with the switching time $s$ is $q^s = \mathrm{Pr}_{\hat\Pbb_T}[H^{x_t,s}_{t}(\w_T) \le L_H^{x_t,s} \beta]$.
    
    By~\cite[Theorem 4]{vincent2024guarantees}, computing $\bar{q}^s$ with~\eqref{eq:failure_prob_bound} for each switching time $s\in \{t,\dots,t+M-1\}$ will satisfy $\mathrm{Pr}_N[q^s\leq \bar{q}^s]\geq 1-\rho$, where $\rho = 1 - (1-\delta)^{\frac{1}{M}}$. The set of switching times for which $\bar q^s \le \frac{\varepsilon-\alpha}{1-\alpha}$ is defined as $\Ical = \{s \in \{t,\ldots,t+M-1\} ~| ~ \bar q^s \le\frac{\varepsilon-\alpha}{1-\alpha}\}$.
    {Since the rollouts for all $M$ switching times are independent, we apply the $\text{{\v{S}}id{\'a}k}$ correction for multi-hypothesis testing~\cite{abdi2007bonferroni} which implies $\Pr_N[q^{s_t^\star} \le \bar q^{s_t^\star}] \ge 1- \delta$, where we select $s_t^\star = \max \Ical$.}


 Then, using~\Cref{lem:dist_robust} with~\Cref{assum:wasserstein} and~\ref{assum:lip}, we have
\eqnN{
\sup_{\Qbb_T \in \Bcal_\beta(\hat\Pbb_T)} \mathrm{Pr}_{\Qbb_T}[H_{t}^{x_t,s_t^\star}(\w_T)< 0]
\le q^{s_t^\star} \le \frac{\varepsilon-\alpha}{1-\alpha}.
}
The above condition bounds the failure probability for the worst-case noise distribution. Taking the complement of the failure event, we recover~\ref{eq:drcc}: 
\eqnN{ \inf_{\Qbb_T \in \Bcal_\beta(\hat\Pbb_T)} \mathrm{Pr}_{\Qbb_T}[H_{t}^{x_t,s_t^\star}(\w_T)\ge 0]
\ge 1-\tfrac{\varepsilon-\alpha}{1-\alpha}. }

{Hence, by~\Cref{assum:wasserstein},~\ref{eq:fhcc} holds with confidence $1-\delta$. Then, the system reaches $\Ccal$ at time $t+T$ safely with probability at least $\frac {1-\varepsilon}{1-\alpha}$ and confidence $1-\delta$. Under~\Cref{assum:exist_backup}, the backup policy $\pi_B$ renders $\Ccal$ forward invariant with probability at least $1-\alpha$. Composing these conditions, we have that $s_t^\star$ satisfies~\ref{eq:cc} with confidence at least $1-\delta$.}
\end{proof}

\begin{remark}
\label{remark:per_iteration}
Note that~\Cref{thm:infinite_horizon} provides a confidence guarantee for each invocation of~\sgk{} at time $t$. Whenever~\sgk{} certifies a switching time $s_t^\star$, the corresponding switching policy is guaranteed to satisfy~\ref{eq:cc} with confidence level $1-\delta$. However, repeated invocations over multiple time steps compounds the statistical uncertainty associated with the certification. A probabilistic guarantee over the entire closed-loop horizon can be obtained by reducing $\varepsilon$ and $\delta$ over time via a union bound argument, as in~\cite{bastani2021stat,mestres2025probabilistic}, at the cost of increased conservatism.
\end{remark}


\begin{remark}
\label{remark:backup_for_complex}
With a backup policy $\pi_B$ (\Cref{assum:exist_backup}), \sgk{} ensures~\ref{eq:cc} at state $x_t$ and thus infinite-horizon probabilistic safety. However, constructing such $\pi_B$ offline is often challenging for complex, high-dimensional systems. When no backup policy is available, \sgk{} instead provides a finite-horizon safety guarantee over the horizon $T$. {Studying systematic backup policy construction for such systems is for future work.}
\end{remark}
\section{Simulation}
\begin{table}[t]
\vspace*{6pt}
\scriptsize
\setlength{\tabcolsep}{4pt}
\centering
\caption{Open-Loop Analysis with Varying $\varepsilon$ and $\beta$.}
\resizebox{\linewidth}{!}{%
\begin{tabular}{c|cccc|cccc}
\toprule
 & \multicolumn{4}{c|}{$\varepsilon$} & \multicolumn{4}{c}{$\beta$} \\
 & 0.05 & 0.1 & 0.2 & 0.5 & 0.05 & 0.1 & 0.2 & 0.5 \\
\midrule
Empirical Correctness $[\%]$ & 98 & 99 & 100 & 100 & 94 & 100 & 100 & 100 \\
\bottomrule
\end{tabular}
}
\label{tab:open_loop}
\vspace{-5pt}
\end{table}

We evaluate \sgk{} on three systems: (i) a Dubins vehicle avoiding obstacles, (ii) a Formula 1 car navigating a racing track, and (iii) an F-16 traversing a narrow canyon, assessing safety, performance, and applicability across systems and uncertainty structures. {For (ii) and (iii), constructing and formally verifying a backup policy satisfying~\Cref{assum:exist_backup} is challenging, as discussed in~\Cref{remark:backup_for_complex}. We thus manually design controllers to track states with reduced performance as practical backup policies (e.g., a slow centerline tracker and a climbing controller to escape the canyon).} Evaluations were done on a computer with 12th Gen Intel® Core™ i9-12900KF CPU with 64 GB RAM and an Nvidia RTX 3080 Ti GPU, with parallel rollouts in JAX. Details and videos can be found on our project page.
\footnote{Project Page: \href{https://drs-gk.github.io/}{https://drs-gk.github.io/}}

\begin{table}[t]
\scriptsize
\setlength{\tabcolsep}{4pt}
\centering
\vspace{-2pt}
\caption{Closed-Loop Performance with Varying $\varepsilon$ (left) and $\beta$ (right)}
\resizebox{\linewidth}{!}{%
\begin{tabular}{c|ccc|c|ccc}
\toprule
$\varepsilon$ & \begin{tabular}[c] {@{}c@{}}\textbf{Safe \&}\\ \textbf{Feasible}\\$[\%]$\end{tabular} 
& \begin{tabular}[c]{@{}c@{}}\textbf{Goal}\\\textbf{Reaching}\\\textbf{Time} [s]\end{tabular}
& \begin{tabular}[c]{@{}c@{}}\textbf{Backup}\\\textbf{Ratio}\\ $[\%]$\end{tabular} 
& $\beta$ & \begin{tabular}[c] {@{}c@{}}\textbf{Safe \&}\\ \textbf{Feasible}\\$[\%]$\end{tabular}
& \begin{tabular}[c]{@{}c@{}}\textbf{Goal}\\\textbf{Reaching}\\\textbf{Time} [s]\end{tabular}  
& \begin{tabular}[c]{@{}c@{}}\textbf{Backup}\\\textbf{Ratio} \\ $[\%]$\end{tabular} \\
\midrule
0.01  & 96 & $\infty$    & 100.0 & 0.01   & 92 & 7.52& 21.32 \\
0.05  & 88 & 7.49     & 22.37 & 0.05 & 96 & 7.53 & 19.95 \\
0.1   & 80 & 7.37     & 25.53 & 0.1  & 96 & 7.64 & 20.69\\
0.5   & 12 & 6.50     & 32.23 & 0.5   & 96 & 10.11 & 33.03 \\
\bottomrule
\end{tabular}
}
\label{tab:toy_example}
\end{table}

\begin{table}[t]
\vspace*{6pt}
    \centering
    \caption{Closed-Loop Performance Comparison with Baselines}
    \label{tab:initial_state_comparison}
    \resizebox{\linewidth}{!}{%
\begin{tabular}{l|cccc|c}
\toprule
\textbf{Metric}
& \textbf{DR-CBF}
& \textbf{WT-MPC}
& \textbf{SMPS}
& \textbf{Vanilla GK}
& \textbf{DRS-GK} \\
\midrule
Safety Rate [\%]
& 16
& 28
& 4
& 16
& 96 \\

Mean Time to Goal [s]
& 6.15
& 5.08
& 9.60
& 6.51
& 7.51 \\

Mean Backup Usage [\%]
& --
& --
& 38.02
& 3.29
& 17.06 \\

Mean Computation Time [s]
& 0.0259
& 7.3255
& 0.0103
& 0.0152
& 0.0195 \\
\bottomrule
\end{tabular}
}
\vspace{-15pt}
\end{table}

\subsubsection{\textbf{Dubins Vehicle}}
{
We conduct three sets of experiments with an acceleration-controlled Dubins vehicle tasked with reaching a goal while avoiding obstacles: (i) open-loop to verify~\Cref{thm:infinite_horizon}, (ii) closed-loop 1 to evaluate performance across varying parameter settings, and (iii) closed-loop 2 to compare against baselines.
}

We apply forward Euler discretization with sampling time $\Delta t=0.05$ to the continuous-time uncertain Dubins vehicle model with state $x=\begin{bmatrix} p_x & p_y & \psi & v \end{bmatrix}^\top$:
\eqnN{
\dot{p}_x &= v \cos\psi & \dot{p}_y &= v \sin\psi \\
\dot{\psi} &= (1 + \eta_\omega) u_\omega + v \xi_\omega 
& \dot{v} &= (1 + \eta_a) u_a + v \xi_a
}
The noise vector $\w^f_t= [\xi_a, \xi_\omega, \eta_a, \eta_\omega]^\top$ captures model mismatch and actuator gain errors. {The true noise distribution $\Pbb_T^\pi$ is generated by a finite (truncated) dataset of noise sampled from a Gaussian mixture model. We then perturb each sample by a randomly oriented unit vector scaled by $\beta$ to get $\hat\Pbb_T^\pi$ such that $\Wcal_\infty(\Pbb_T^\pi,\hat\Pbb_T^\pi) \le \beta$.}


The constrained inputs are acceleration $u_a \in [-5.0, 5.0]$ and turn rate $u_\omega \in [-\frac{\pi}{4}, \frac{\pi}{4}]$. Bounds on velocity are enforced in the dynamics such that $v\in[9,20]$, which prevent the vehicle from stopping. The safety constraint is to avoid two static circular obstacles with centers $(p^i_x, p^i_y)$ and radii $R_i$, $i=1,2$. {The obstacle parameters are aggregated as $\theta = [\theta_1^\top, \theta_2^\top]^\top$ where $\theta_i = [p_x^i, p_y^i, R_i]$.} This defines the safe set $\mathcal{S}(\theta) = \{x \in \mathbb{R}^4 \mid h(x,\theta) = \min_{i\in\{1,2\}}\left((p_x - p^i_x)^2 + (p_y - p^i_y)^2 - R_i^2\right) \ge 0\}$.

{We apply the method from~\cite{gao2021computing} to synthesize an \ac{PCIS} for the orbit backup controller, which results in a torus-shaped set in position, accounting for drift from a perfect circular orbital pattern due to noise. We also include noisy measurements of the obstacle centers and radii, which manifest as additive uncertainty in the parameters $\theta$. The known uncertainty $\Pbb^\theta$ is a zero-mean skew normal distribution with $\alpha_{\rm shape} = 5.0$ and $\gamma_{\rm scale} = 1.0$, clipped to $\pm3\gamma_{\rm scale}$.}

\begin{figure}
    \centering
    \includegraphics[width=0.91\linewidth]{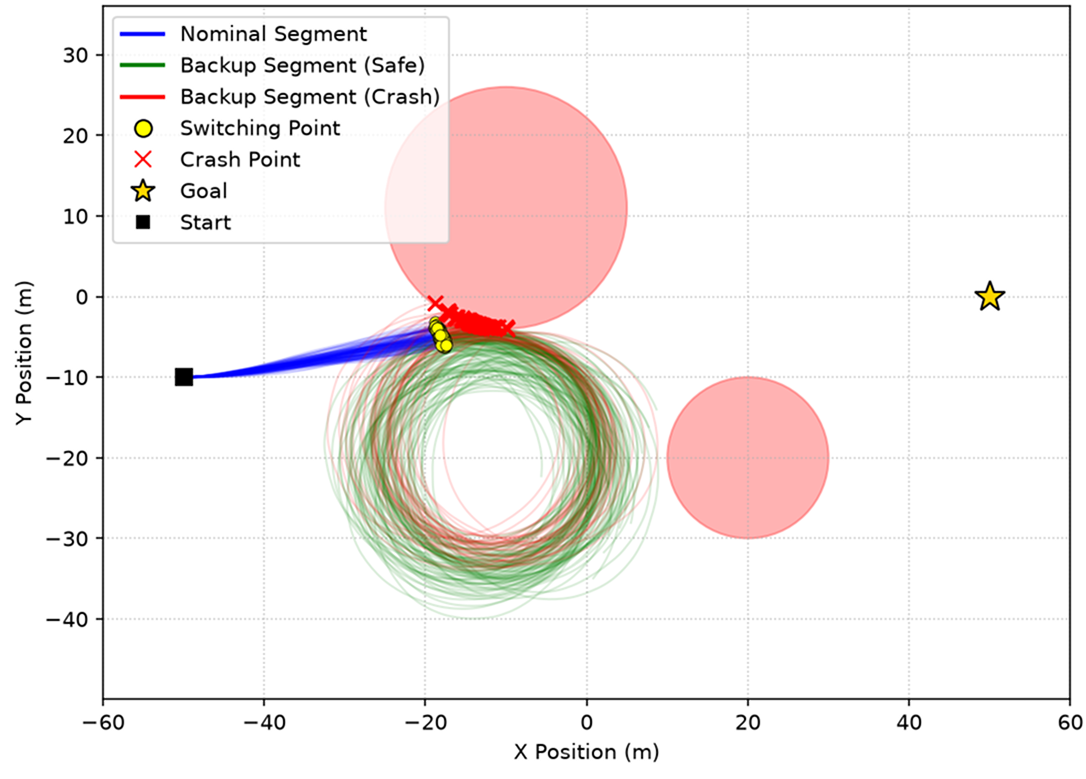}
    \caption{Example of rollouts from a specific initial state and switching time used for open-loop analysis of \sgk{}.}
    \label{fig:open_loop}
\end{figure}

\textbf{Open-Loop:} 
{We empirically validate the $1-\delta$ correctness guarantee of~\Cref{thm:infinite_horizon} using a single iteration of \sgk{} with varying values of $\varepsilon$ and $\beta$. First, we approximate the true failure probability for each switching time $s\in\{0,\ldots,M-1\}$ with $10000$ independent rollouts of $\pi_S(;s)$ over a long horizon $T_{MC}=300$, initialized near an obstacle where many switching times have high failure probabilities. We then compute the empirical confidence as the percentage of the $100$ trials in which the switching time $s_t^\star$ selected by a single \sgk{} iteration satisfies~\ref{eq:cc}. We fix $\delta = 0.1$, $T=50$, $M=50$, and $\alpha=\min\{0.05, \varepsilon/2\}$. \Cref{tab:open_loop} shows that the empirical correctness is consistently above $1-\delta = 0.9$, validating~\Cref{thm:infinite_horizon}. \Cref{fig:open_loop} depicts rollouts with switching time $s=50$ starting at $x=[-50,-10,0,10]$, showing a portion of rollouts than become unsafe after switching to the backup due to close proximity to the obstacle.




}
{\textbf{Closed-Loop 1:}
We also evaluate the effect of the failure probability $\varepsilon$ and radius $\beta$ in \sgk{} by varying them independently over 25 random initial states, while fixing the other at $\varepsilon=0.05$ or $\beta=0.01$. We fix $T = 100$, $M=50$, $\delta=0.01$, $\alpha=\min\{0.05, \varepsilon/2\}$, and $N=1000$. \Cref{tab:toy_example} shows that decreasing $\varepsilon$ or increasing $\beta$ results in more conservative behaviors, improving safety at the cost of higher backup usage and longer goal-reaching times.

\begin{figure}
    \centering
    \includegraphics[width=\linewidth]{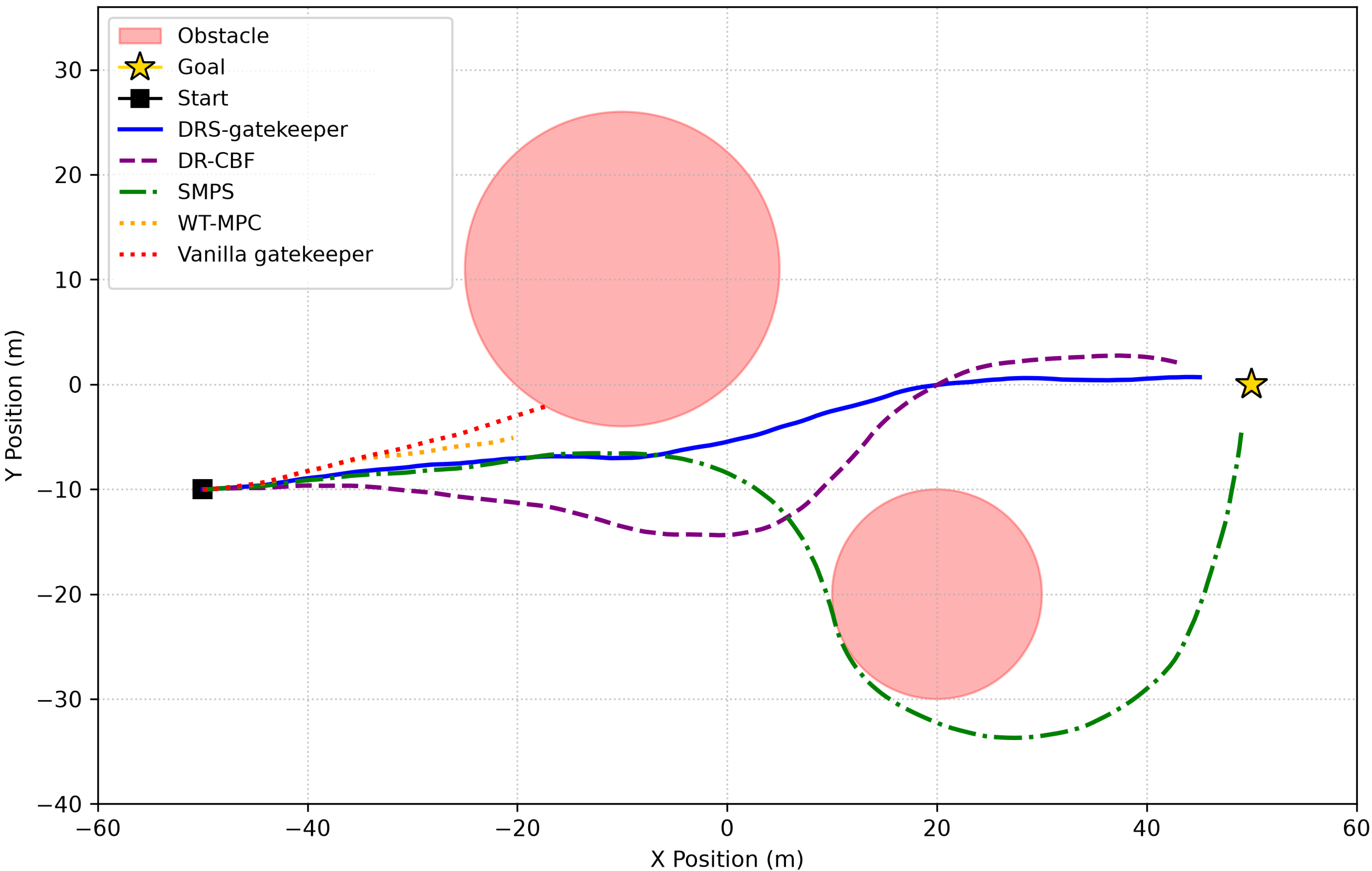}
    \caption{Representative trajectories of \sgk{} and baselines for closed-loop performance comparisons.}
    \label{fig:sim_compare}
\end{figure}

\textbf{Closed-Loop 2:}
We compare \sgk{} to distributionally robust CBF (DR-CBF)~\cite{long2026sensor}, Wasserstein tube MPC (WT-MPC)~\cite{aolaritei2023wasserstein}, statistical model predictive shielding (SMPS)~\cite{bastani2021stat}, and vanilla \gk{}~\cite{agrawal2024gatekeeper}. All methods use a nominal policy that provides a turn rate command proportional to the heading error to the goal and an acceleration command proportional to the target velocity error. The backup-based controllers (SMPS, vanilla~\gk{}, and ours) employ a backup controller that slows the vehicle to its minimum speed and turns away from the nearest obstacle to enter an orbit. WT-MPC uses a linearized Dubins model due to its restriction to linear systems.} 

Representative trajectories of the methods are shown in~\Cref{fig:sim_compare}. WT-MPC and vanilla~\gk{} either become infeasible or result in unsafe trajectories due to unaccounted disturbances. Although SMPS maintains safety, it is more conservative as it only considers triggering the backup controller after one time step. DR-CBF produces the trajectory most comparable to ours; however, due to the solver's myopic behavior, it achieves inferior overall performance. In contrast, our method produces a higher-quality trajectory while maintaining safety.

{We evaluate the methods over 25 random initial states with $\varepsilon=0.1$, $\delta=0.1$, $\beta=0.1$, and $T=100$. For \sgk{}, we fix $M=50$ and $N=1000$. \Cref{tab:initial_state_comparison} reports the empirical safety rate, mean goal-reaching time over successful trials (within 250 time steps), backup usage, and mean computation time, treating solver infeasibility as unsafe for DR-CBF and WT-MPC. \sgk{} achieves the highest safety rate (96\%) while being computationally efficient, at the cost of more frequent backup intervention and a modest increase in goal-reaching time. In contrast, DR-CBF and WT-MPC often become infeasible, while SMPS and vanilla~\gk{} fail to maintain safety under distributional shifts and stochastic disturbances.}

\begin{table}[t]
\vspace{6pt}
\centering
\small
\caption{Performance Summary For Formula 1 Racing Car}
\label{tab:formula_performance}
\small
\setlength{\tabcolsep}{1.5pt}
\renewcommand{\arraystretch}{1.1}

\definecolor{pastelblue}{RGB}{220,235,250}
\definecolor{pastelpink}{rgb}{1.0, 0.82, 0.86}
\definecolor{pastelorange}{RGB}{255,235,220}
\definecolor{pastelgray}{RGB}{245,245,245}
\definecolor{lightmauve}{rgb}{0.86, 0.82, 1.0}

\begin{tabular}{lcccc}
\toprule
\textbf{Method} 
& \begin{tabular}[c]{@{}c@{}}\textbf{Safety} \\ {[\%]}\end{tabular}
& \begin{tabular}[c]{@{}c@{}}\textbf{Avg.}\\\textbf{Speed [m/s]}\end{tabular}
& \begin{tabular}[c]{@{}c@{}}\textbf{Avg. Comp.}\\\textbf{Time [ms]}\end{tabular}
& \begin{tabular}[c]{@{}c@{}}\textbf{Avg. Backup}\\\textbf{ Ratio [\%]}\end{tabular}
\\
\midrule

\rowcolor{pastelgray}
MPPI (no noise)        & 100 & 52.58 & 6.42 & - \\

\rowcolor{pastelpink}
MPPI (Gaussian)        &  92  & 52.17 & 6.35 & - \\

\rowcolor{lightmauve}
MPPI (Empirical)       & 76  & 52.49 & 6.34 & - \\

\midrule
\rowcolor{pastelpink}
Ours (Gaussian) & 94 & 52.05 & 20.90 & $1.04$ \\

\rowcolor{lightmauve}
Ours (Empirical) & 92 & 51.58 & 21.01 & $2.31$ \\
\bottomrule
\end{tabular}
\vspace{-20pt}
\end{table}

\subsubsection{\textbf{Formula 1}}

We consider a Dallara F317 racing car (shown in~\Cref{fig:hero_fig}(a)) with state
$x_t = [p^x_t, p^y_t, \psi_t, v^x_t, v^y_t, r_t, q_t]^\top$,
where $(p^x_t, p^y_t)$ and $\psi_t$ denote global position and heading, $(v^x_t, v^y_t)$ are body-frame velocities, $r_t$ is the yaw rate, and $q_t$ is the steering angle. The control input is $u_t = [ a_{\text{th},t}, a_{\text{br},t},q_{\text{cmd},t}]^\top$, with throttle $0\leq a_{\text{th},t}\leq a_{\max}=12.0$, braking $0\leq a_{\text{br},t}\leq a_{\text{br,max}}=18.0$.\, and steering commands $|q_{\text{cmd},t}|\leq q_{\text{max}}=0.32$. 

We model the car's dynamics using the discrete-time bicycle dynamics $x_{t+1} = f(x_t, u_t)$ as
\eqn{
x_{t+1} = x_t + \Delta t \underbrace{\begin{bmatrix} 
\bar{v}^x_t \cos \psi_t - \bar{v}^y_t \sin \psi_t \\ 
\bar{v}^x_t \sin \psi_t + \bar{v}^y_t \cos \psi_t \\ 
\bar{r}_t \\
a_{\text{long},t} + v^y_t r_t \\
\frac{1}{m}(F_{yf} \cos q_t + F_{yr}) - v^x_t r_t \\
\frac{1}{I_z}(l_f F_{yf} \cos q_t - l_r F_{yr}) \\
k_{\delta}(q_{\text{cmd}, t} - q_{t})
\end{bmatrix}}_{\dot{x}_t}
\label{eq:racecar_dynamics}
}
where $\bar{v}^x_t, \bar{v}^y_t, \bar{r}_t$ are interval-averaged quantities for numerical stability, and $\Delta t=0.05$ is the sampling time. The longitudinal acceleration is
\[
a_{\text{long},t} = a_{\text{th},t} a_{\max} - a_{\text{br},t} a_{\text{br,max}} - \frac{C_d}{m} v^x_t |v^x_t|.
\]
Lateral tire forces follow a linear model
\[
F_{y,\{f,r\}} = C_{\{f,r\}} \alpha_{\{f,r\}},
\]
with slip angles $\alpha_f = q_t - \arctan\!\big(\frac{v^y_t + l_f r_t}{v^x_t}\big)$ and 
$\alpha_r = -\arctan\!\big(\frac{v^y_t - l_r r_t}{v^x_t}\big)$.

To identify model parameters, we use a dataset of offline laps on the $4.657 \ \mathrm{km}$ Barcelona-Catalunya GP circuit obtained from~\cite{remonda2024simulation}, where the data were collected using Assetto Corsa. The empirical model uncertainty (visualized in~\Cref{fig:car_uncertainty}) is quantified via one-step prediction residuals in the longitudinal/lateral velocities and yaw rate between the nominal dynamics model and the ground-truth telemetry.
\eqnN{
\delta v^x_t &= v^x_{t+1,\text{gt}} - (v^x_t + \dot{v}^x_t \Delta t), \\
\delta v^y_t &= v^y_{t+1,\text{gt}} - (v^y_t + \dot{v}^y_t \Delta t), \\
\delta r_t &= r_{t+1,\text{gt}} - (r_t + \dot{r}_t \Delta t).
}

\begin{figure}
    \centering
    \includegraphics[width=0.8\linewidth]{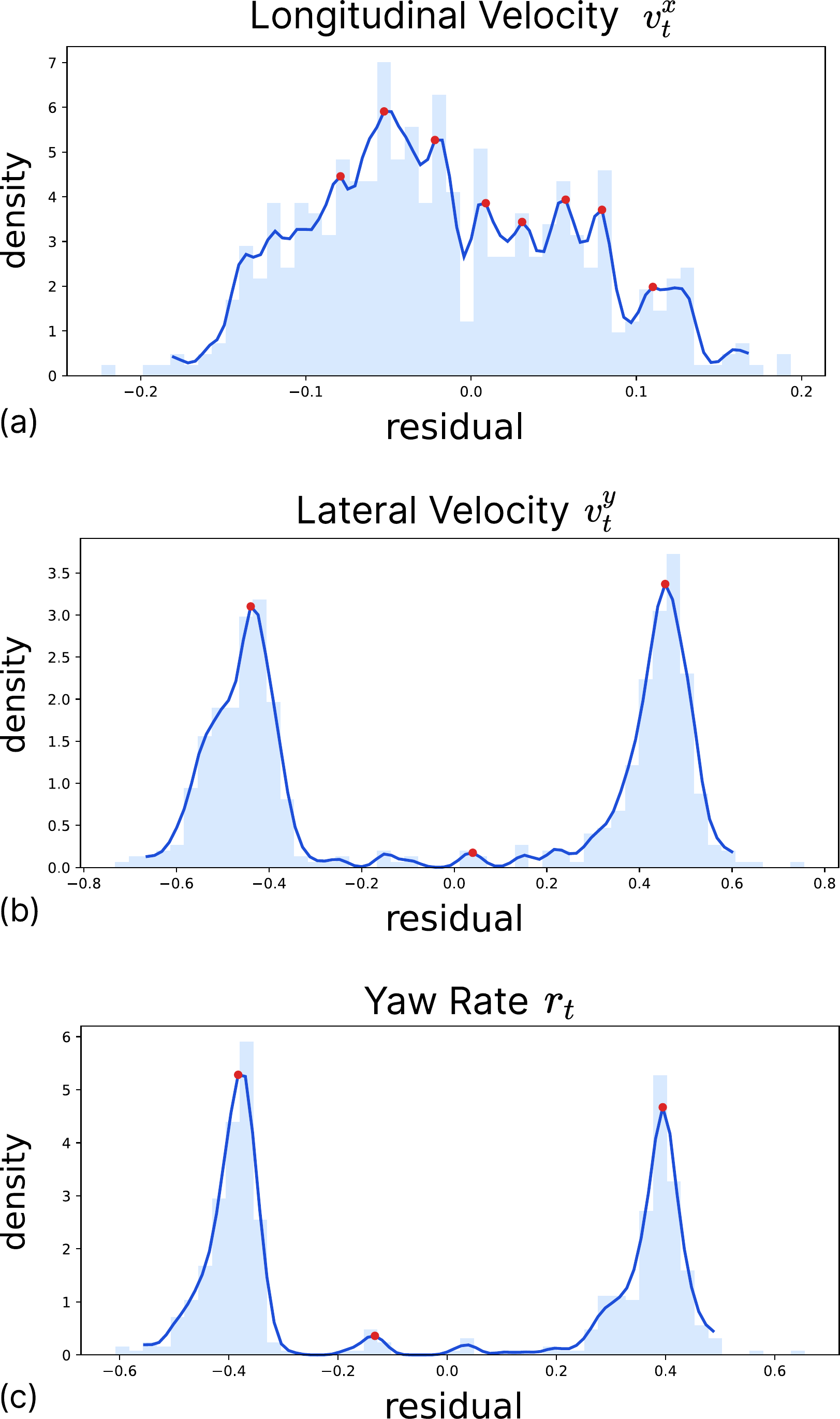}
    \caption{Empirical uncertainty distributions for (a) longitudinal velocity, (b) lateral velocity, and (c) yaw rate.}
    \label{fig:car_uncertainty}
\end{figure}

The safety function measures distance to the track boundary (width $6.0 \ \mathrm{m}$). We use Model Predictive Path Integral (MPPI)~\cite{williams2018information} as the nominal policy $\pi_N$, tuned to track the centerline at $55 \ \mathrm{m/s}$ without leaving the track. The backup policy $\pi_B$ is a PID controller that keeps the vehicle within $2.75 \ \mathrm{m}$ of the centerline while maintaining $6 \ \mathrm{m/s}$. We use $M=10$, $T=10$, $N=1000$, $\delta=0.1$, and $\varepsilon=0.1$. Based on 100 offline laps, we set $L_H^{x,s}=2.0$ for all $x$ and $s$.

We conduct an ablation study under two noise settings:
\begin{itemize}
    \item Gaussian: The true distribution $\Pbb_T$ is i.i.d. zero-mean Gaussian with $\sigma=(0.45,0.45,0.25,0.015)$ for longitudinal/lateral velocities, yaw rate, and steering angle, respectively, clipped to $\pm4\sigma$.
    
    \item Empirical: ${\Pbb}_T$ is state-dependent and conditioned on the past five control inputs. It reflects uncertainty in longitudinal/lateral velocities and yaw rate. Perception uncertainty in the track boundary is modeled as zero-mean Gaussian ($\sigma=0.50$) clipped to $\pm4\sigma$.
\end{itemize}

For the Gaussian setting, we set $\hat{\Pbb}_T=\Pbb_T$ and sample directly from $\Pbb_T$ with $\beta=0$. For the empirical setting, we construct $\hat{\Pbb}_T$ with two sources of model mismatch: (i) the safety function $H^{x_t,s}_{t}$ is evaluated without accounting for the injected Gaussian measurement noise, and (ii) the state uncertainty noise is sampled conditioned only on the current control input rather than the full input history. We then construct the ambiguity set $\Bcal_\beta(\hat{\Pbb}_T)$ with radius $\beta=0.25$.

We run 50 trials on the Barcelona-Catalunya GP circuit with and without~\sgk{}, each attempting to complete a half lap with varying seeds. \Cref{tab:formula_performance} reports safety (percentage of successful half laps), average speed, computation time, and backup ratio. Computation time for our method includes both MPPI and~\Cref{alg:gatekeeper}.

Our approach consistently improves safety across all noise settings. While MPPI alone performs well under Gaussian setting, it degrades significantly under empirical setting with distributional shift. In contrast, our method achieves over $90\%$ success under both settings, with a slight reduction in speed, minimal backup intervention ($<3\%$), and a modest computational overhead of only $14$-$15~\mathrm{ms}$.











\begin{table}[t]
\vspace{6pt}
\centering
\small
\caption{Performance Summary For F-16 Fighter Jet}
\begin{tabular}{l c c c}
\toprule
\textbf{Metric} & \textbf{Baseline}  & \textbf{Ours}\\
\midrule
\textbf{Safety [\%]} & $40$ & $88$&\\
\textbf{Avg. Altitude Below Canyon [m]} & $-639$ & $-618$ & \\
\textbf{Avg. Speed [m/s] }& $200$ & $194$&\\
\textbf{Avg. Computation Time [ms]} & $0.1$ & $12.2$ &\\
\textbf{Avg. Backup Usage [\%]} & - & $13.9$ \\
\bottomrule
\end{tabular}
\label{tab:f16_performance}
\vspace{-20pt}
\end{table}
\subsubsection{\textbf{F-16}}

Next, we evaluated our method on an F-16 fighter jet flying as low and as fast as possible through a narrow canyon in JSBSim~\cite{berndt2004jsbsim}, as visualized in~\Cref{fig:hero_fig}~(b). A 4.7-km-long optimal trajectory was generated offline using a simplified three-degrees-of-freedom (3DoF) model using the method in~\cite{sharpe2025accelerating} and tracked online using proportional-derivative (PD) control to achieve angle of attack, bank angle, and speed commands on the full 12-state and 4-control-input F-16 model. We collected trajectories in JSBSim with turbulence and fit a third-degree polynomial model of the six aerodynamic coefficients, with uncertainty modeled by the residuals between the data and polynomial fits.

The F-16 aircraft has state vector $x_t = [p^N_t, p^E_t, h_t, u_t, v_t, w_t, p_t, q_t, r_t, \phi_t, \theta_t, \psi_t]^\top$, where $(p^N_t, p^E_t, h_t)$ and $(\phi_t,\theta_t,\psi_t)$ denote inertial position and Euler attitude, $(u_t,v_t,w_t)$ are body-frame velocities, and $(p_t,q_t,r_t)$ are body angular rates. The control input is $u_t = [\delta_{A,t}, \delta_{E,t}, \delta_{R,t}, \delta_{T,t}]^\top$, with aileron, elevator, rudder, and throttle commands\footnote{Note that $\delta$, $\alpha$, and $\beta$ in the F-16 dynamics denote control inputs, angle of attack, and sideslip angle, and are distinct from \sgk{} parameters.}.

Let $c_{(\cdot)}=\cos(\cdot)$, $s_{(\cdot)}=\sin(\cdot)$, and $t_{(\cdot)}=\tan(\cdot)$. We model the aircraft dynamics using a discrete-time rigid-body model $x_{t+1} = f(x_t, u_t)$ as
\eqn{\label{eq:fighterjet_dynamics}
x_{t+1} = x_t + \Delta t \underbrace{\begin{bmatrix}
\dot p^N_t \\
\dot p^E_t \\
\dot h_t \\
X_t + r_t v_t - q_t w_t - g s_{\theta_t} \\
Y_t + p_t w_t - r_t u_t + g s_{\phi_t} c_{\theta_t} \\
Z_t + q_t u_t - p_t v_t + g c_{\phi_t} c_{\theta_t} \\
\dot p_t \\
\dot q_t \\
\dot r_t \\
p_t + q_t s_{\phi_t} t_{\theta_t} + r_t c_{\phi_t} t_{\theta_t} \\
q_t c_{\phi_t} - r_t s_{\phi_t} \\
(q_t s_{\phi_t} + r_t c_{\phi_t})/c_{\theta_t}
\end{bmatrix}}_{\dot x_t}
}
where $\Delta t=\frac{1}{30}$ is the sampling time and $g=32.174\,\mathrm{ft/s^2}$. The position kinematics are
\[
\begin{aligned}
\dot p^N_t &= u_t(c_{\theta_t}c_{\psi_t}) + v_t(s_{\phi_t}s_{\theta_t}c_{\psi_t} - c_{\phi_t}s_{\psi_t}) \\
&\quad + w_t(c_{\phi_t}s_{\theta_t}c_{\psi_t} + s_{\phi_t}s_{\psi_t}), \\
\dot p^E_t &= u_t(c_{\theta_t}s_{\psi_t}) + v_t(s_{\phi_t}s_{\theta_t}s_{\psi_t} + c_{\phi_t}c_{\psi_t}) \\
&\quad + w_t(c_{\phi_t}s_{\theta_t}s_{\psi_t} - s_{\phi_t}c_{\psi_t}), \\
\dot h_t &= u_t s_{\theta_t} - v_t s_{\phi_t}c_{\theta_t} - w_t c_{\phi_t}c_{\theta_t}.
\end{aligned}
\]

Aerodynamic coefficients are predicted via polynomial approximation
\[
[\hat C_{X,t},\hat C_{Y,t},\hat C_{Z,t},\hat C_{L,t},\hat C_{M,t},\hat C_{N,t}]
= \phi(\upsilon_t)^\top W + B,
\]
where $\upsilon_t=[\alpha_t,\beta_t,\mathrm{Mach}_t,p_t,q_t,r_t,\delta_{e,t},\delta_{a,t},\delta_{r,t}]^\top$, and
$\alpha_t=\arctan2(w_t,u_t), ~ \beta_t=\arcsin\!\left(\frac{v_t}{V_t}\right), ~ V_t=\sqrt{u_t^2+v_t^2+w_t^2}.$
Using dynamic pressure $q_{\infty,t}=\frac12\rho V_t^2$, wing area $S$, span $b$, mean chord $\bar c$, and mass $m$:
\[
\begin{aligned}
X_t &= q_{\infty,t}\frac{S}{m}\hat C_{X,t}+T(\delta_{t,t}), & Y_t &= q_{\infty,t}\frac{S}{m}\hat C_{Y,t}, \\
Z_t &= q_{\infty,t}\frac{S}{m}\hat C_{Z,t}, & L_t &= q_{\infty,t}Sb\,\hat C_{L,t}, \\
M_t &= q_{\infty,t}S\bar c\,\hat C_{M,t}, & N_t &= q_{\infty,t}Sb\,\hat C_{N,t},
\end{aligned}
\]
and angular accelerations satisfy
\[
\dot\omega_t = I^{-1}\!\left(\begin{bmatrix}L_t\\M_t\\N_t\end{bmatrix} - \omega_t \times (I\omega_t)\right),\qquad
\omega_t=[p_t,q_t,r_t]^\top.
\]
Control inputs are normalized and bounded by $\delta_{a,t},\delta_{e,t},\delta_{r,t}\in[-1,1]$, $\delta_{t,t}\in[0,1]$.

\begin{figure}[h]
\centering
\includegraphics[width=\linewidth]{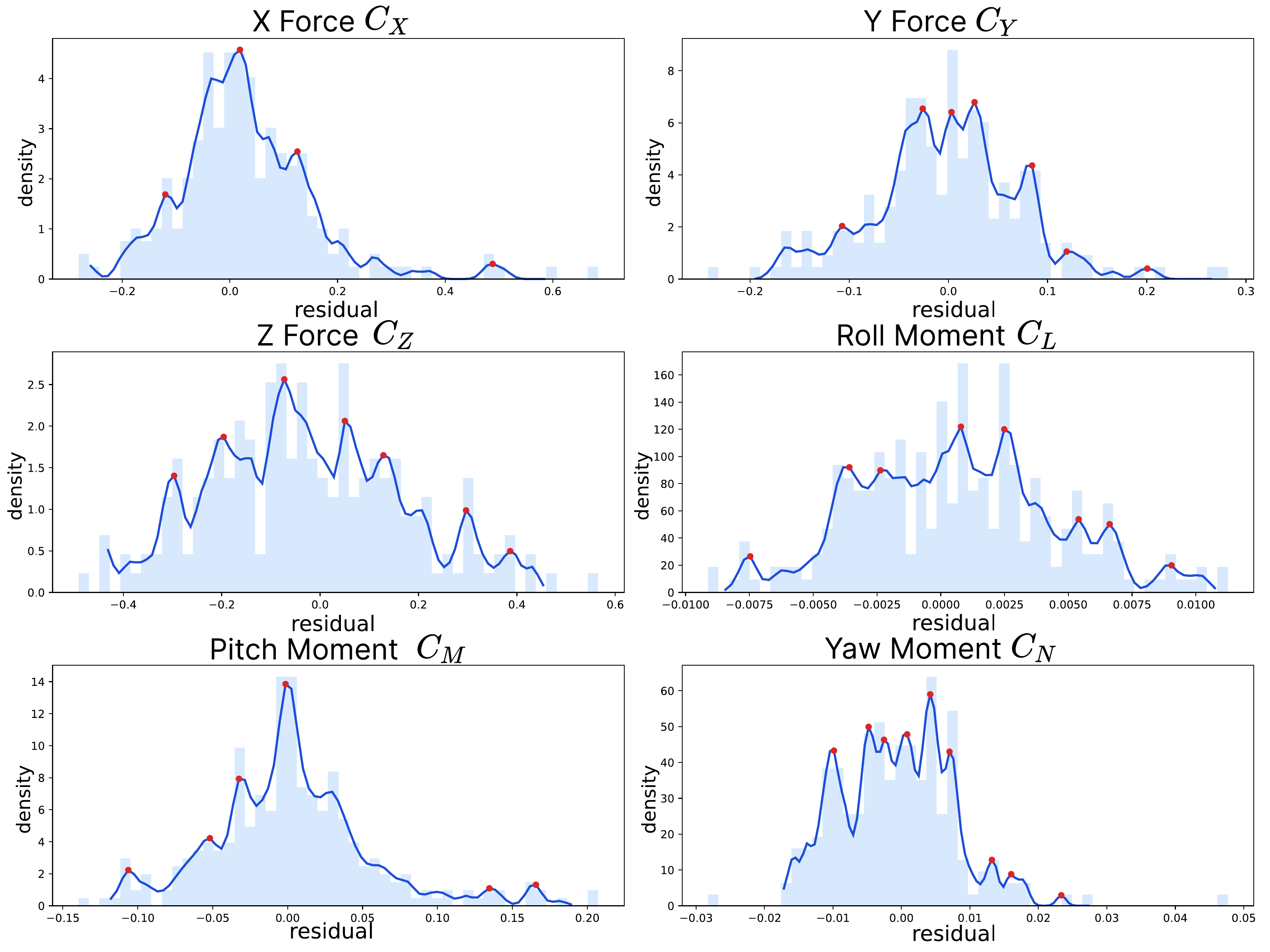}
\caption{F-16 empirical uncertainty distributions conditioned on a specific flight condition.}
\end{figure}

The nominal aerodynamic model is identified from generated simulator data. Each rollout provides state, action, and next-state samples, converted into aerodynamic coefficient targets. A regularized polynomial regression is fit from the selected flight and control features to the six aerodynamic coefficient channels:
\[
\mathbf{C}_{\mathrm{nom}} = [C_X, C_Y, C_Z, C_L, C_M, C_N]^{\top}.
\]

After fitting the nominal polynomial model, residual coefficient errors are computed on the same generated dataset. Structured residual trends are removed using context-dependent calibration features, namely flight condition. Centered residuals are stored as an empirical uncertainty model. During rollout, uncertainty samples are drawn from residuals observed in nearby historical contexts; injected uncertainty remains dependent on the current flight regime rather than being modeled as fixed Gaussian noise.

In the \sgk{} rollout, sampled uncertainty enters directly in coefficient space. Let
\[
\mathbf{w} = [w_{C_X}, w_{C_Y}, w_{C_Z}, w_{C_L}, w_{C_M}, w_{C_N}]^{\top}
\]
denote the sampled coefficient residual. Perturbed aerodynamic coefficients are $\mathbf{C} = \mathbf{C}_{\mathrm{nom}} + \mathbf{w}$. Dimensional forces and moments are converted via:
\[
\begin{aligned}
X &= (C_X + w_{C_X})\frac{\bar q S}{m} + T(\delta_t), & Y &= (C_Y + w_{C_Y})\frac{\bar q S}{m}, \\
Z &= (C_Z + w_{C_Z})\frac{\bar q S}{m}, & L &= (C_L + w_{C_L})\bar q S b, \\
M &= (C_M + w_{C_M})\bar q S \bar c, & N &= (C_N + w_{C_N})\bar q S b.
\end{aligned}
\]

Uncertainty is injected before rigid-body propagation by perturbing aerodynamic coefficients, changing body-axis force and moment channels used by the rollout dynamics.

The safety function is defined as the distance to the canyon wall. The mission is to fly through the canyon as low and as fast as possible. The backup policy is implemented as PD-controller that tracks an altitude of $1500$ ft, slightly above the canyon walls, with a backup speed target of $300$ kts. We set parameters to $M=25$, $T=100$, $N=256$, $\delta=0.1$, $\varepsilon=0.05$, and $\beta=1.0$. We also fixed $L_H^{x,s}$ to $50.0$ for all $x$ and $s$. The controllers run at $30$ Hz and \sgk{} runs at $3$ Hz, or every 10 steps. During the intermediate steps, the most recently computed switching time is used. 

We simulate $50$ trials with and without \sgk{}, where uncertainties arise from turbulence and residual errors in the polynomial dynamics fit. \Cref{tab:f16_performance} shows that our method significantly improves safety compared to just executing the tracking controller, as the offline optimal trajectory is generated using a simplified dynamics model without wind. This improvement is achieved with minimal backup intervention, only a slight reduction in speed and increase in altitude, while maintaining low computational cost.

\section{Conclusion}
\label{sec:conc}
We introduce Distributionally Robust Stochastic \gk{} (\sgk{}), a safety filter that provides probabilistic safety certificates under distributional ambiguity. Our method handles arbitrary uncertainty structure and nonlinear dynamics through efficient sampling-based rollouts and optimizes a switching time between nominal and backup policies. We conduct extensive simulations to validate our method.



\bibliographystyle{IEEEtran}
\bibliography{references_ll}

\end{document}